\newtheorem{theorem}{Theorem}
\newtheorem{definition}{Definition}
\def\@ACM@checkaffil{
	\if@ACM@instpresent\else
	\ClassWarningNoLine{\@classname}{No institution present for an affiliation}%
	\fi
	\if@ACM@citypresent\else
	\ClassWarningNoLine{\@classname}{No city present for an affiliation}%
	\fi
	\if@ACM@countrypresent\else
	\ClassWarningNoLine{\@classname}{No country present for an affiliation}%
	\fi
}
\title{GraphRNN Revisited: An Ablation Study and Extensions for Directed Acyclic Graphs}
\author{Taniya Das}
\affiliation{University of Oxford}
\author{Mark Koch}
\affiliation{University of Oxford}
\author{Maya Ravichandran}
\affiliation{University of Cambridge}
\author{Nikhil Khatri}
\affiliation{University of Oxford}
\begin{abstract} GraphRNN is a deep learning-based architecture proposed by You et al. \cite{you2018graphrnn} for learning generative models for graphs.
We replicate the results of You et al. using a reproduced implementation of the GraphRNN architecture and evaluate this against baseline models using new metrics. Through an ablation study, we find that the BFS traversal suggested by You et al. to collapse representations of isomorphic graphs contributes significantly to model performance. Additionally, we extend GraphRNN to generate directed acyclic graphs by replacing the BFS traversal with a topological sort. We demonstrate that this method improves significantly over a directed-multiclass variant of GraphRNN on a real-world dataset.
\end{abstract} 
\begin{document}

\maketitle

\section{Introduction}
The paper \textit{GraphRNN: Generating Realistic Graphs with Deep Auto-regressive Models} by You et al. \cite{you2018graphrnn} proposes a deep auto-regressive model for graph generation.
They present the GraphRNN (Graph Recurrent Neural Network) architecture, which can generate variable-sized graphs using a hierarchy of two RNNs.
Their approach scales to graphs with large node counts by employing a breadth first search (BFS) ordering scheme that reduces the number of representations a given graph can have.
You et al. evaluate their method on a suite of synthetic and real-world graph datasets, introducing new metrics for quantitative evaluation based on the Maximum Mean Discrepancy (MMD) of various graph statistics.

We demonstrate that we have a performant implementation of the original GraphRNN model by reproducing the qualitative results and reported MMD scores for the datasets used by You et al. We focus on reproducing results on the smaller versions of the datasets due to limited compute resources. 
The code for our reproduced implementation, evaluation, and extensions is publicly available at
\begin{center}
    \texttt{\url{https://github.com/mark-koch/graph-rnn}}
\end{center}

\paragraph{Overview}
We begin with a brief summary of the GraphRNN architecture in Section \ref{sec:summary}.
Section \ref{sec:results} features the results of the reproduced experiments, followed by a discussion of alternative evaluation metrics in Section \ref{sec:alt_metrics}.
We run an ablation study in Section \ref{sec:bfs-ablation} to empirically investigate the importance of the BFS ordering for model performance.
Further, we present a novel extension for generating directed acyclic graphs (DAGs) and report improvements over an implementation of GraphRNN for multiclass directed graphs in Section \ref{sec:directed_graphs}.
Finally, we conclude by discussing our overall results in Section \ref{sec:discussion}.\footnote{This work was completed as part of a class assignment.}

\section{GraphRNN Summary}\label{sec:summary}

We briefly summarise the parts of the GraphRNN paper relevant to our experiments.

\subsection{Representing Graphs as Sequences}

\begin{figure}
    \begin{minipage}[c]{0.4\linewidth}
        \begin{tikzpicture}[every node/.style={shape=circle,draw=black,inner sep=0.75mm,font={\footnotesize\boldmath}}]
            \node (A) at (0,0) {$A$};
            \node (B) at (2,0) {$B$};
            \node (C) at (0,-2) {$C$};
            \node (D) at (2,-2) {$D$};
            \node (E) at (1,-1) {$E$};
            
            \path (A) edge (B);
            \path (A) edge (C);
            \path (C) edge (D);
            \path (B) edge (D);
            \path (A) edge (E);
            \path (C) edge (E);
        \end{tikzpicture}
    \end{minipage}%
    \begin{minipage}[c]{0.5\linewidth}
        \centering
        \newcounter{nodecount}
        \newcommand\tabnodeb[1]{\addtocounter{nodecount}{1} \tikz \node (\arabic{nodecount}) {#1};}
        \newcommand\tabnodeg[1]{\addtocounter{nodecount}{1} \tikz \node (\arabic{nodecount}) {\color{black!30} #1};}
        \tikzstyle{every picture}+=[remember picture,baseline]
        \setlength{\tabcolsep}{2pt}
        \vspace{0.5cm}
        \begin{tabular}{c|ccccc}
            \tabnodeb{A} & \tabnodeg{0} & \tabnodeb{1} & \tabnodeb{1} & \tabnodeb{0} & \tabnodeb{1} \\
            \tabnodeb{B} & \tabnodeg{1} & \tabnodeg{0} & \tabnodeb{0} & \tabnodeb{1} & \tabnodeb{0} \\
            \tabnodeb{C} & \tabnodeg{1} & \tabnodeg{0} & \tabnodeg{0} & \tabnodeb{1} & \tabnodeb{1} \\
            \tabnodeb{D} & \tabnodeg{0} & \tabnodeg{1} & \tabnodeg{1} & \tabnodeg{0} & \tabnodeb{0} \\
            \tabnodeb{E} & \tabnodeg{1} & \tabnodeg{0} & \tabnodeg{1} & \tabnodeg{0} & \tabnodeg{0} \\ \hline
                         & \tabnodeb{A} & \tabnodeb{B} & \tabnodeb{C} & \tabnodeb{D} & \tabnodeb{E} \\
        \end{tabular}
        \begin{tikzpicture}[overlay]
            \draw[black](3.north west) -- (3.north east) -- (3.south east) -- (3.south west) -- cycle;
            \draw[black](4.north west) -- (4.north east) -- (10.south east) -- (10.south west) -- cycle;
            \draw[black](5.north west) -- (5.north east) -- (17.south east) -- (17.south west) -- cycle;
            \draw[black](6.north west) -- (6.north east) -- (24.south east) -- (24.south west) -- cycle;
            
            \node [above=0.25cm] at (3) (A) {$S^\pi_2$};
            \node [above=0.25cm] at (4) (B) {$S^\pi_3$};
            \node [above=0.25cm] at (5) (C) {$S^\pi_4$};
            \node [above=0.25cm] at (6) (D) {$S^\pi_5$};
        \end{tikzpicture}
    \end{minipage}
    \caption{A graph and its adjacency matrix under the node ordering $A,B,C,D,E$. The highlighted sequence of upper triangular columns $S^\pi = (S_2^\pi, S_3^\pi, S_4^\pi, S_5^\pi)$ is enough to fully characterise the graph.}
    \label{fig:adjacency triangle}
\end{figure}

You et al. consider undirected graphs $G = (V, E)$ given by a set of nodes $V = \{ v_1,...,v_n \}$ and a symmetric edge relation $E \subseteq V\times V$.
Further, $E$ is assumed to be irreflexive, i.e. nodes do not have self-loops.

A typical representation of graphs are \textit{adjacency matrices}.
You et al. consider adjacency matrices for all possible permutations of nodes.
Given a permutation function ${\pi : V \to V}$, they define the adjacency matrix $A^\pi \in \{0,1\}^{n \times n}$ as
$$ A^\pi_{i,j} = \begin{cases}
    1 & \text{if } (\pi(v_i), \pi(v_j)) \in E \\
    0 & \text{otherwise.}
\end{cases} $$
The autoregressive model presented by You et al. generates graphs by generating their adjacency matrices column by column.
More precisely, it suffices to generate the upper triangle of $A^\pi$ since $A^\pi$ is symmetric for undirected graphs.
This is illustrated in Figure \ref{fig:adjacency triangle}.
Formally, undirected graphs are represented by a sequence
$$ S^\pi = (S^\pi_2,...,S^\pi_n) $$
where $S^\pi_i \in \{0,1\}^{i-1}$ is the $i$-th column of the upper right triangle of $A^\pi$, i.e.
$$ S^\pi_i = (A^\pi_{1,i}, ..., A^\pi_{i-1,i})^T. $$

Instead of learning to generate graph sequences $S^\pi$ for all possible permutations $\pi$, You et al. only consider node orderings that arise from breadth first search (BFS) traversals of the graph.
They claim that this approach has two main benefits:
\begin{itemize}
    \item 
    Since many graphs have less than $n!$ BFS orderings, this technique usually reduces the number of different representations a graph can have.
    
    \smallskip
    \item
    If $\pi$ is a BFS ordering, then for all $i < j$, $A^\pi_{i,j-1} = 1$ and $A^\pi_{i,j}=0$ implies that $A^\pi_{i',j'} = 0$ for all $i' \leq i$ and $j' \geq j$ (see Proposition 1 in \cite{you2018graphrnn}).
    As a consequence, the entries of $A^\pi$ are spread around the diagonal of the matrix as shown in Figure \ref{fig:adj_mat}.
    This allows us to shrink the size of the vector $S^\pi_i$ needed to represent each column to some fixed length $M \leq n$ that only includes the entries around the diagonal as illustrated in Figure \ref{subfig:adj_mat_m}.
    Formally,
    $$ S^\pi_i = (A^\pi_{\max(1,i-M)}, ..., A^\pi_{i-1,i})^T. $$
    This reduces the number of predictions the generation model needs to make.
   
\end{itemize}
You et al. highlight this BFS technique as a crucial component of their approach and claim that it drastically improves scalability.
We empirically investigate the contribution of BFS in Section \ref{sec:bfs-ablation}.

\begin{figure}
    \centering
    \begin{subfigure}{0.47\linewidth}
        \vspace{1.2pt}
         $n ~ \left\{ ~ \vcenter{\hbox{\includegraphics[width=0.8\linewidth]{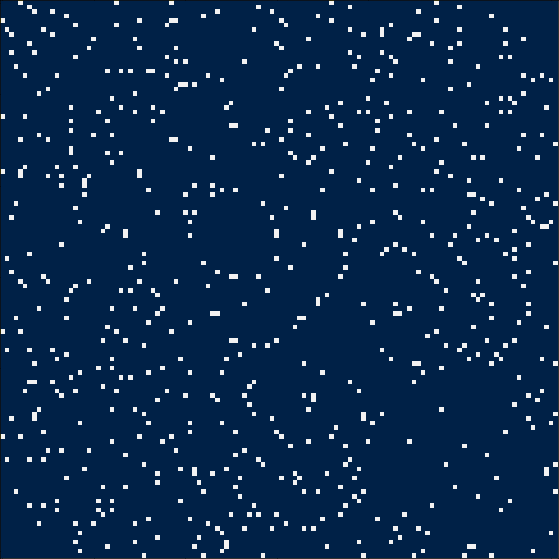}}} \right. \hphantom{nnn}$
         \vspace{1.2pt}
        \caption{\label{subfig:adj_mat_default}}
    \end{subfigure}
    \quad
    \begin{subfigure}{0.47\linewidth}
        $\hphantom{M}~ \vcenter{\hbox{\includegraphics[width=0.8\linewidth]{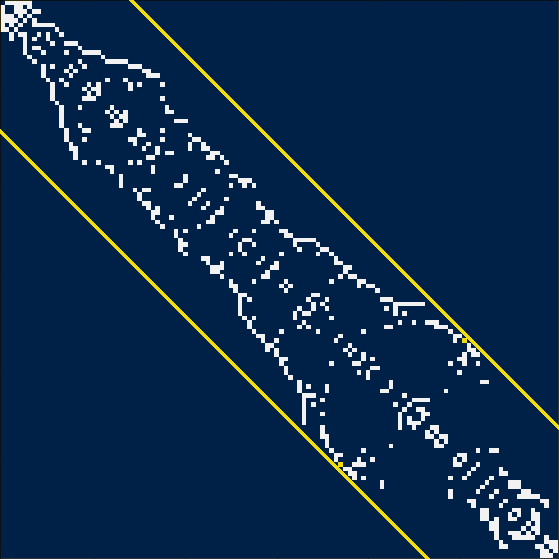}}} ~ \begin{aligned}\\[1.8cm] \left.\begin{aligned}\\[10pt]\end{aligned}\right\} M \end{aligned}$
        \caption{\label{subfig:adj_mat_m}}
    \end{subfigure}
    \caption{\label{fig:adj_mat}(a) Shows the adjacency matrix of a random graph under a random node ordering. (b) Depicts the adjacency matrix of the same graph under a BFS ordering. We can see that the entries are closer to the diagonal under the BFS ordering, allowing us to represent the upper triangle using smaller slices of length $M$.}
\end{figure}

\subsection{GraphRNN Architecture}

You et al. propose a hierarchy of two recurrent neural networks (RNNs) to autoregressively generate $S^\pi$.
The first RNN operates on the node level.
Given a sequence $S^\pi_1,...,S^\pi_i$ of previously generated adjacency vectors, it maintains a hidden representation $h_i$ that captures the global state of the graph.
This hidden state is used to initialise the second RNN that operates on the edge level and is used to autoregressively sample the components of $S^\pi_{i+1}$, i.e. the connectivity of the $i+1$ 'th node.
You et al. also introduce a simplified version of their model called GraphRNN-S in which the second edge-level RNN is replaced with a multi-layer perceptron (MLP) generating a vector $\theta_{i+1}$ of probabilities from which $S^\pi_{i+1}$ is sampled independently.

\begin{figure*}[t]
    \setlength{\tabcolsep}{0pt}
    \newcommand{\graphwidth}{2cm}
    \newcommand{\catsep}{5pt}
    \centerline{%
    \begin{tabular}{ccccc|ccccc|cccc}
        & \multicolumn{3}{c}{\textbf{Grid-small}} & & & \multicolumn{3}{c}{\textbf{Community-small}} & & & \multicolumn{3}{c}{\textbf{Ego-small}}
        \\[5pt]
        \rotatebox{90}{\hspace{5pt}\textbf{Train}} \hspace{\catsep} &
        \includegraphics[width=\graphwidth]{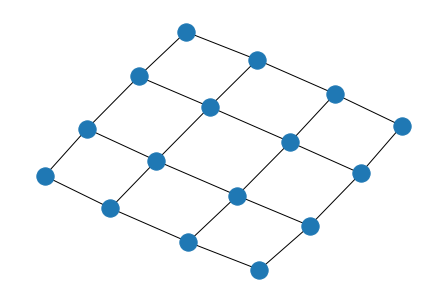} &
        \includegraphics[width=\graphwidth]{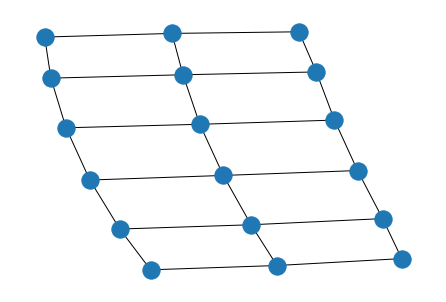} &
        \includegraphics[width=\graphwidth]{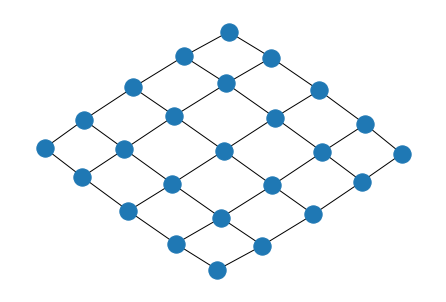} 
        & \hspace{\catsep} & \hspace{\catsep} &
        \includegraphics[width=\graphwidth]{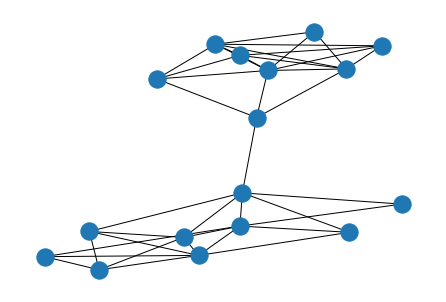} &
        \includegraphics[width=\graphwidth]{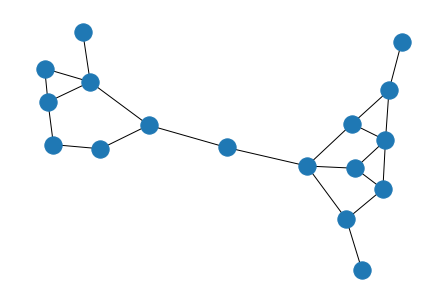} &
        \includegraphics[width=\graphwidth]{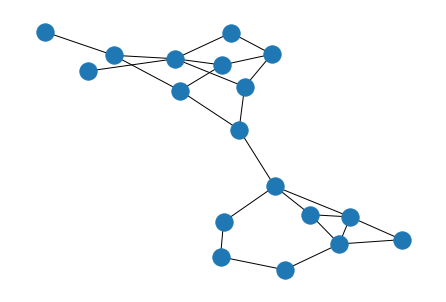} 
        & \hspace{\catsep} & \hspace{\catsep} &
        \includegraphics[width=\graphwidth]{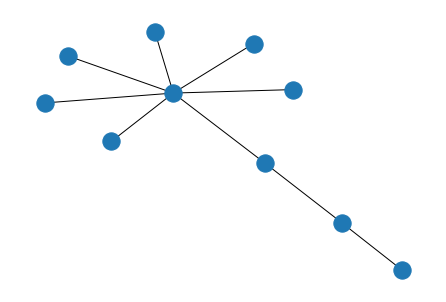} &
        \includegraphics[width=\graphwidth]{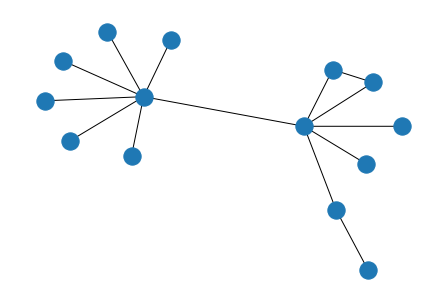} &
        \includegraphics[width=\graphwidth]{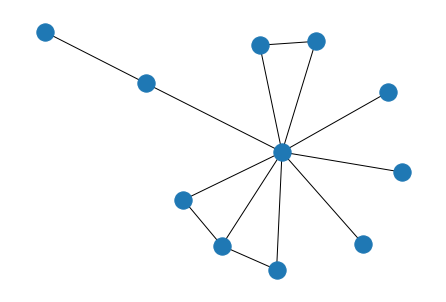}
        \\[5pt]
        \rotatebox{90}{\hspace{10pt}\textbf{Ours}} \hspace{\catsep} &
        \includegraphics[width=\graphwidth]{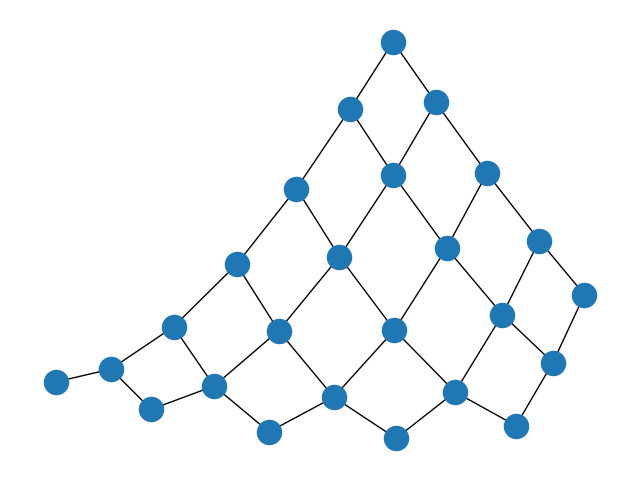} &
        \includegraphics[width=\graphwidth]{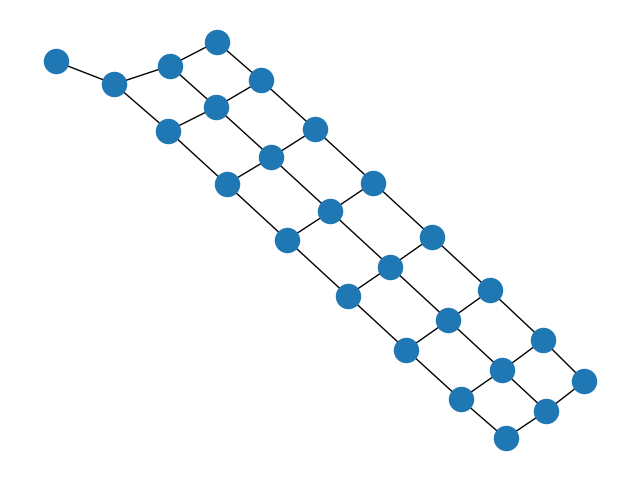} &
        \includegraphics[width=\graphwidth]{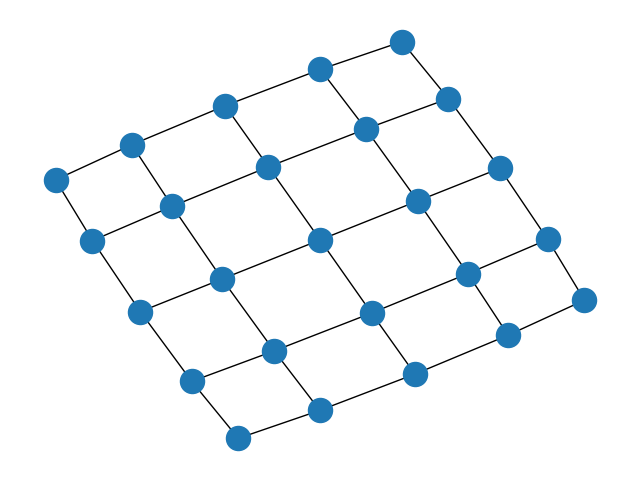}
        & \hspace{\catsep} & \hspace{\catsep} &
        \includegraphics[width=\graphwidth]{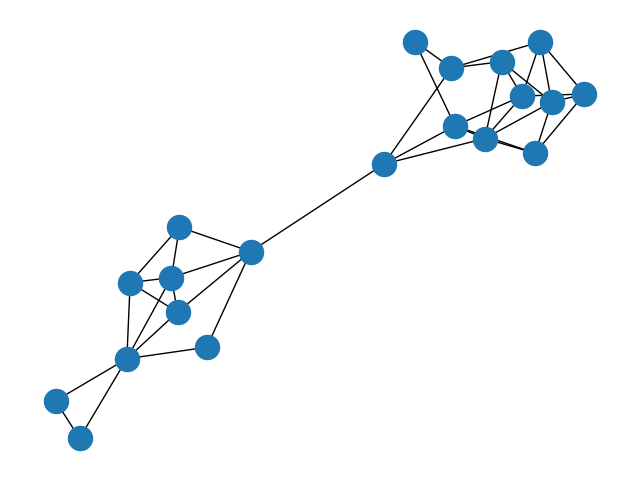} &
        \includegraphics[width=\graphwidth]{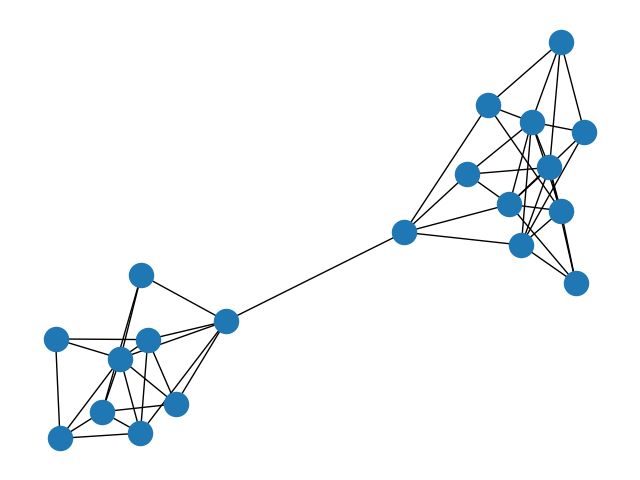} &
        \includegraphics[width=\graphwidth]{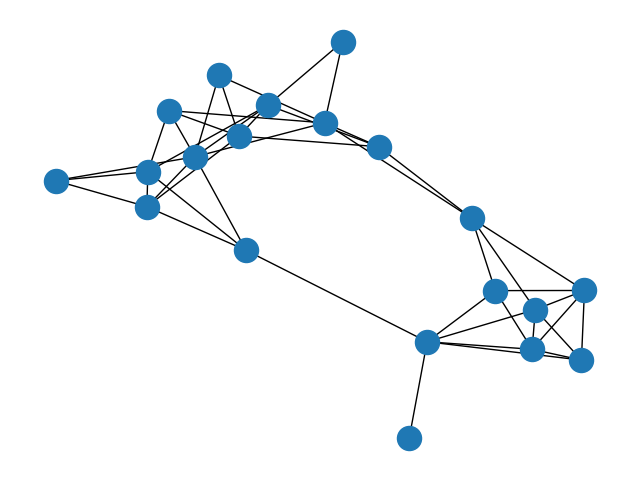}
        & \hspace{\catsep} & \hspace{\catsep} &
        \includegraphics[width=\graphwidth]{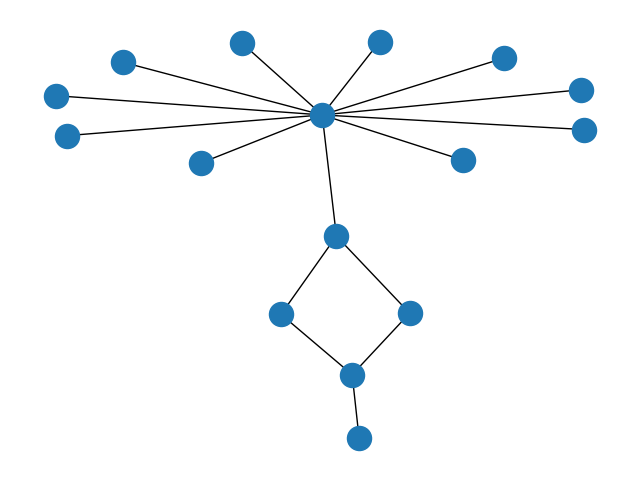} &
        \includegraphics[width=\graphwidth]{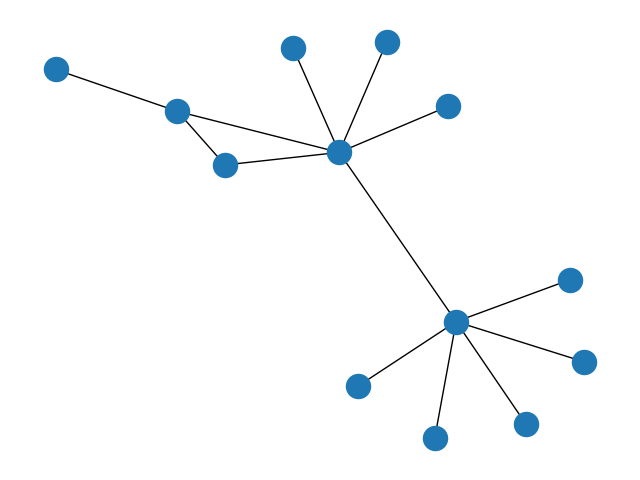} &
        \includegraphics[width=\graphwidth]{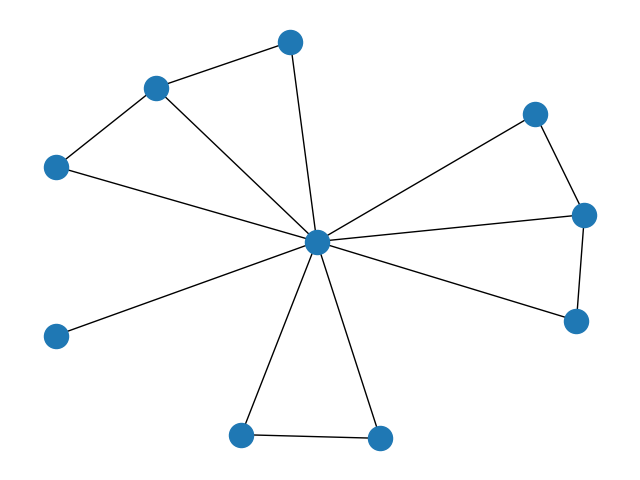}
    \end{tabular}}
    \caption{\label{fig:graphs} Generated graphs from our implementation of GraphRNN compared to the training data.}
\end{figure*}

\begin{figure*}[t]
    \centering
    \begin{subfigure}{0.45\linewidth}
        \centering
        \includegraphics[width=0.8\linewidth]{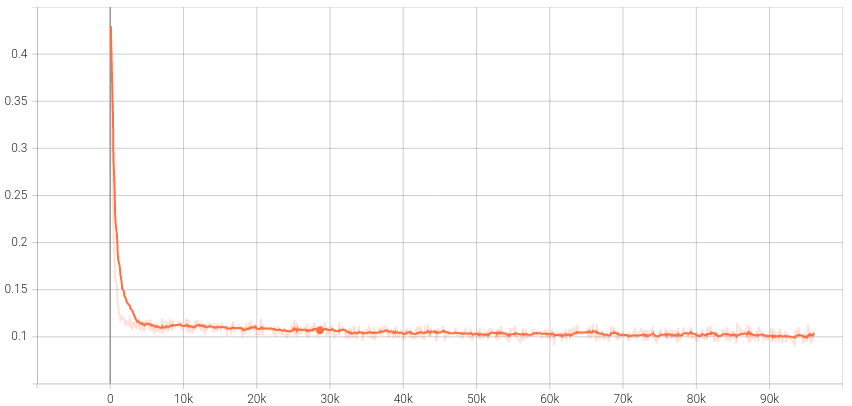}
        \caption{}
    \end{subfigure}
    \begin{subfigure}{0.45\linewidth}
        \centering
        \includegraphics[width=0.8\linewidth]{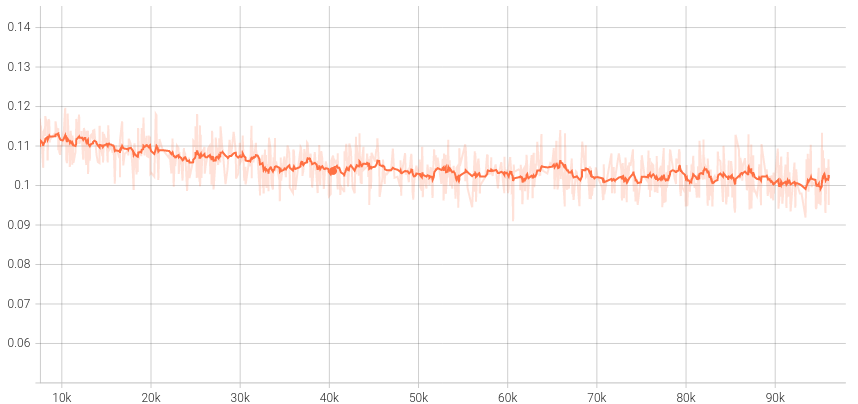}
        \caption{}
    \end{subfigure}
    \caption{\label{fig:training-curves} Loss curve for training GraphRNN on the \textit{community-small} datset. Part (a) shows the full curve and part (b) the curve after 10k iterations.}
\end{figure*}

\subsection{Evaluation Metrics}

Quantitatively evaluating the quality of a generated graph is a challenging task. Graph-matching is in general intractable, and thus You et al. suggest quantifying the quality of generated graphs through comparison of a set of graph statistics.
To compare node-level characteristics, such as degree and clustering coefficient between two graphs, they suggest the use of maximum mean discrepancy (MMD) to measure the similarity of their distributions. 

You et al. use the MMD measure on three metrics: degree, clustering coefficients, and orbit statistics \cite{Graphlet_counting} distributions.

\subsection{Experiments}

\paragraph{Datasets}
You et al. evaluate their models on two synthetic and two real-world graph datasets.
The synthetic ones include a \textit{community} dataset (two-community graphs with $60 \leq |V| \leq 160$ generated using the Erd\H{o}s-R\'enyi model (ER)~\cite{erdHos1960evolution}) and a \textit{grid} dataset (2D rectangular grid graphs with side length ranging from 10 to 20).
The Citeseer network \cite{sen2008collective} is used to generate 3-hop graphs with $50 \leq |V| \leq 399$, forming the \textit{ego} dataset.
Finally, You et al. use a \textit{protein} dataset consisting of graphs with $100 \leq |V| \leq 500$ representing protein structures. In addition, they evaluate smaller equivalents of these datasets.

\paragraph{Baselines}
You et al. compare their methods against the rule-based Erd\H{o}s-R\'enyi (ER) \cite{erdHos1960evolution} and Barab\'asi-Albert (BA)~\cite{albert2002statistical} models that generate graphs through random processes. They also compare their methods against Kronecker graph models~\cite{leskovec2010kronecker}, mixed-membership stochastic block models \cite{airoldi2008mixed}, and the deep learning approaches GraphVAE \cite{simonovsky2018graphvae} and DeepGMG~\cite{li2018learning}.

\section{Reproduced Experiments}\label{sec:results}

\begin{table*}[t]
\caption{\label{tab:results_comprehensive} Comparison of our GraphRNN implementation with the original implementation and baselines, using the MMD measures of degree, clustering and orbit distributions}
\begin{tabular}{llrrrrrrrrr}
 & & \multicolumn{3}{c}{\textbf{community-small}} & \multicolumn{3}{c}{\textbf{ego-small}} & \multicolumn{3}{c}{\textbf{grid-small}} \\ \cline{3-11} 
 & & \multicolumn{1}{l}{\textbf{deg.}} & \multicolumn{1}{l}{\textbf{clus.}} & \multicolumn{1}{l|}{\textbf{orbit}} & \multicolumn{1}{l}{\textbf{deg.}} & \multicolumn{1}{l}{\textbf{clus.}} & \multicolumn{1}{l|}{\textbf{orbit}} & \multicolumn{1}{l}{\textbf{deg.}} & \multicolumn{1}{l}{\textbf{clus.}} & \multicolumn{1}{l}{\textbf{orbit}} \\ \cline{3-11} 
\multicolumn{1}{l|}{\multirow{2}{*}{\textbf{Baselines}}} & \multicolumn{1}{l|}{\textbf{BA}} & 0.459 & 0.707 & \multicolumn{1}{r|}{0.083} & 0.159 & 0.589 & \multicolumn{1}{r|}{0.012} & 1.206 & \textbf{0} & 0.084 \\
\multicolumn{1}{l|}{} & \multicolumn{1}{l|}{\textbf{ER}} & \textbf{0.016} & 0.193 & \multicolumn{1}{r|}{0.012} & 0.136 & 0.050 & \multicolumn{1}{r|}{0.035} & 0.242 & 0.656 & 0.024 \\ \hline
\multicolumn{1}{l|}{\multirow{2}{*}{\textbf{You et al. \cite{you2018graphrnn}}}} & \multicolumn{1}{l|}{\textbf{GraphRNN-S}} & 0.020 & 0.150 & \multicolumn{1}{r|}{0.010} & 0.002 & 0.050 & \multicolumn{1}{r|}{\textbf{0.001}} & \multicolumn{1}{l}{NA} & \multicolumn{1}{l}{NA} & \multicolumn{1}{l}{NA} \\
\multicolumn{1}{l|}{} & \multicolumn{1}{l|}{\textbf{GraphRNN}} & 0.030 & 0.030 & \multicolumn{1}{r|}{0.010} & \bm{$3\times10^{-4}$} & 0.050 & \multicolumn{1}{r|}{\textbf{0.001}} & \multicolumn{1}{l}{NA} & \multicolumn{1}{l}{NA} & \multicolumn{1}{l}{NA} \\ \hline
\multicolumn{1}{l|}{\multirow{2}{*}{\textbf{Ours}}} & \multicolumn{1}{l|}{\textbf{GraphRNN-S}} & 0.038 & \textbf{0.007} & \multicolumn{1}{r|}{0.006} & 0.004 & \textbf{0.032} & \multicolumn{1}{r|}{\textbf{0.001}} & 0.080 & 0.065 & 0.007 \\
\multicolumn{1}{l|}{} & \multicolumn{1}{l|}{\textbf{GraphRNN}} & 0.031 & 0.012 & \multicolumn{1}{r|}{\textbf{0.001}} & 0.004 & 0.038 & \multicolumn{1}{r|}{0.003} & \textbf{0.065} & \textbf{0} & \textbf{0.001} \\\hline 
\end{tabular}
\end{table*}

For our reproducibility experiments, we focus on the scaled-down datasets \textit{community-small} and \textit{ego-small}. 
Further, we generated our own \textit{grid-small} dataset containing smaller grids of size up to $6\times 6$.
Since grid-small is not part of the original paper, we applied the BA and ER baselines to show a similar performance improvement for this new dataset.

We implemented the GraphRNN architecture in PyTorch following the instructions in \cite{you2018graphrnn}.
In particular, we use the configuration for the smaller model mentioned in Appendix A.1 to match the small datasets and to speed up training. In dataset generation, when there were discrepancies between You et al.'s paper and code, we used the parameters mentioned in the paper. 
For the MMD measure and orbit metric, we used You et al.'s implementation. 

We use a random 80\%/20\% train/test split and train the models for 96k iterations with a batch size of 32, as mentioned in \cite{you2018graphrnn}.
However, we notice that training usually converges after 10k-30k iterations, which is likely due to the smaller size of the datasets and the smaller GraphRNN configuration used.
Across our experiments, we thus use checkpoints from the first 30k iterations of training.
Figure~\ref{fig:training-curves} shows a typical loss curve observed during training. Note that we
applied an accelerated learning rate decay schedule compared to \cite{you2018graphrnn} due to using smaller model configurations.

\begin{table*}[t]
\caption{\label{tab:results_addnl} Comparison between our implementation of GraphRNN and the BA and ER baselines. We use the MMD of betweeness centrality (BC) and closeness centrality (CC), as well as the absolute difference of density (D) and transitivity (T).}
\begin{tabular}{lrrrrrrrrrrrr}
 & \multicolumn{4}{c}{\textbf{community-small}} & \multicolumn{4}{c}{\textbf{ego-small}} & \multicolumn{4}{c}{\textbf{grid-small}} \\ \cline{2-13} 
 & \multicolumn{1}{l}{\textbf{BC}} & \multicolumn{1}{l}{\textbf{CC}} & \multicolumn{1}{l}{\textbf{D}} & \multicolumn{1}{l|}{\textbf{T}} & \multicolumn{1}{l}{\textbf{BC}} & \multicolumn{1}{l}{\textbf{CC}} & \multicolumn{1}{l}{\textbf{D}} & \multicolumn{1}{l|}{\textbf{T}} & \multicolumn{1}{l}{\textbf{BC}} & \multicolumn{1}{l}{\textbf{CC}} & \multicolumn{1}{l}{\textbf{D}} & \multicolumn{1}{l}{\textbf{T}} \\ \hline
\multicolumn{1}{l|}{BA} & 0.054 & 0.026 & 0.060 & \multicolumn{1}{r|}{0.297} & 0.092 & 0.058 & 0.049 & \multicolumn{1}{r|}{0.197} & 0.436 & 0.393 & 0.082 & \textbf{0} \\
\multicolumn{1}{l|}{ER} & 0.038 & 0.022 & 0.055 & \multicolumn{1}{r|}{0.205} & 0.065 & 0.049 & 0.075 & \multicolumn{1}{r|}{0.189} & \textbf{0.409} & \textbf{0.368} & 0.026 & 0.256 \\ \hline
\multicolumn{1}{l|}{GraphRNN-S} & \textbf{0.024} & \textbf{0.021} & 0.045 & \multicolumn{1}{r|}{0.167} & \textbf{0.062} & \textbf{0.045} & 0.068 & \multicolumn{1}{r|}{0.172} & 0.430 & 0.384 & 0.021 & 0.016 \\
\multicolumn{1}{l|}{GraphRNN} & \textbf{0.024} & 0.022 & \textbf{0.044} & \multicolumn{1}{r|}{\textbf{0.166}} & 0.085 & 0.054 & \textbf{0.039} & \multicolumn{1}{r|}{\textbf{0.146}} & 0.470 & 0.381 & \textbf{0.020} & \textbf{0} \\ \hline
\end{tabular}
\end{table*}

\subsection{Results}

Figure \ref{fig:graphs} visualises graphs generated using our implementation of GraphRNN compared to graphs from the training set.
This shows that our implementation is capable of generating the structure of grid, community, and ego graphs, matching the qualitative results presented in Figure 2 in the original paper.
Similar to You et al., we also observe that the model generates grids that are not present in the training data, for example, the $3\times 8$ grid shown in Figure \ref{fig:graphs}.

Table \ref{tab:results_comprehensive} presents a quantitative comparison of our implementation of GraphRNN with You et al.'s results and baseline results for the MMD scores. We generate baseline metrics using the BA and ER models, since the GraphRNN paper does not provide these baselines for the smaller datasets. 
Overall, we observe similar improvements over the baselines as reported in \cite{you2018graphrnn}.

\section{Alternate Graph Similarity Statistics}\label{sec:alt_metrics}

The graph similarity metrics considered by You et al. \cite{you2018graphrnn} (degree distribution, clustering distribution, and orbit statistics) are primarily concerned with individual node statistics and local structures in graphs. To provide further quantitative insight into the properties of graphs generated by GraphRNN, we consider some additional metrics, described below.

\paragraph{Betweenness Centrality (BC)}
Betweenness centrality for a node is the proportion of shortest paths in the graph that pass through that node.
Compared to degree, clustering, and orbit statistics that mainly focus on local graph structures, BC provides a view of the global connectivity of graphs.
Following the previous experiments, we make use of MMD to compare the BC distributions of different graphs.

\paragraph{Closeness Centrality (CC)}
Closeness centrality is a node-level measure which is defined as the reciprocal of the sum of shortest paths to all other nodes in the graph.
Similar to BC, CC captures global graph connectivity. Again, we compare distributions via MMD.

\paragraph{Density (D)}
Density is graph-level measure defined as the ratio between the number of edges present in the graph and the number of possible edges in the graph if each node in the graph were connected to all other nodes. We compare the mean absolute density difference between generated graphs and the graphs in the test set.

\paragraph{Transitivity (T)}
Transitivity is a graph-level property defined as
$$t = \frac{3 \times \text{\#triangles}}{\text{\#triads}},$$
where triads are 3 nodes sharing at least 2 edges. Like the clustering coefficient, transitivity is a measure of the graph property of triadic closure. 
Unlike the clustering coefficient that only describes how densely single nodes are connected, transitivity measures how transitive the edge relation $E$ is.
Similar to density, we evaluate the mean absolute difference of transitivity between generated graphs and the test set.

\begin{table*}[t]
\centering
\caption{\label{tab:mvalues}M values for considered datasets, together with the reduction they represent from the maximum number of nodes in a graph in each dataset}
\begin{tabular}{l|rrr|r}
\textbf{Dataset} & \multicolumn{1}{l}{\textbf{Min nodes}} & \multicolumn{1}{l}{\textbf{Max nodes}} & \multicolumn{1}{l}{\textbf{M}} & \multicolumn{1}{|l}{\textbf{Reduction}} \\ \hline
ego-small        & 4 & 18 & 16 & 11.11\% \\
community-small  & 12 & 20 & 16 & 20\% \\
grid-small       & 4 & 36 & 12 & 66.67\% \\\hline
\end{tabular}
\end{table*}

\subsection{Results}

Table \ref{tab:results_addnl} presents a comparison of our GraphRNN implementation with baseline models for the alternative statistics described above.
For the community-small dataset, we observe that both GraphRNN-S and GraphRNN outperform the baselines for all new metrics. We observe that there is negligible difference in the performance of GraphRNN-S and GraphRNN. This is consistent with the results of Table \ref{tab:results_comprehensive}, where both models perform similarly. For the ego-small dataset, the best score for each metric is achieved by the GraphRNN or GraphRNN-S models. For the grid-small dataset, we observe that the ER model achieves slightly lower values for the centrality measures.
The BA model is able to perfectly replicate the transitivity of the grid-small dataset. This is also consistent with BA achieving a perfect score for clustering coefficient (another measure of triadic closure) on the grid-small dataset in Table \ref{tab:results_comprehensive}. In both cases, our implementation of GraphRNN is able to match this result.

\enlargethispage{\baselineskip}

\begin{table*}[t]
\centering
\caption{\label{tab:results_ablation}Comparison of the GraphRNN model with the GraphRNN-no-M and GraphRNN-no-M ablation models using the MMD measures of degree, clustering and orbit distributions}
\begin{tabular}{lllrrrrrrrrr}
 & & & \multicolumn{3}{c}{\textbf{community-small}} & \multicolumn{3}{c}{\textbf{ego-small}} & \multicolumn{3}{c}{\textbf{grid-small}} \\ \cline{4-12} 
\textbf{BFS} & \textbf{M} & & \multicolumn{1}{l}{\textbf{deg.}} & \multicolumn{1}{l}{\textbf{clus.}} & \multicolumn{1}{l|}{\textbf{orbit}} & \multicolumn{1}{l}{\textbf{deg.}} & \multicolumn{1}{l}{\textbf{clus.}} & \multicolumn{1}{l|}{\textbf{orbit}} & \multicolumn{1}{l}{\textbf{deg.}} & \multicolumn{1}{l}{\textbf{clus.}} & \multicolumn{1}{l}{\textbf{orbit}} \\ \hline
\checkmark & \checkmark & \multicolumn{1}{l|}{\textbf{GraphRNN}} & \textbf{0.031} & \textbf{0.012} & \multicolumn{1}{r|}{\textbf{0.001}} & \textbf{0.004} & 0.038 & \multicolumn{1}{r|}{0.003} & \textbf{0.065} & \textbf{0.000} & \textbf{0.001} \\
\checkmark & $\times$ & \multicolumn{1}{l|}{\textbf{GraphRNN}} & \textbf{0.031} & 0.027 & \multicolumn{1}{r|}{\textbf{0.001}} & 0.008 & \textbf{0.023} & \multicolumn{1}{r|}{\textbf{0.001}} & 0.073 & 0.126 & 0.018 \\
$\times$ & $\times$ & \multicolumn{1}{l|}{\textbf{GraphRNN}} & 0.034 & 0.107 & \multicolumn{1}{r|}{0.023} & 0.067 & 0.096 & \multicolumn{1}{r|}{0.022} & 0.134 & 0.660 & 0.018 \\ \hline
\end{tabular}
\end{table*}

\section{Ablation - Investigating the Contribution of BFS} \label{sec:bfs-ablation}

We perform an ablation study to test You et al.'s claims regarding the benefit of BFS ordering of the graph adjacency matrix. The purpose of this study is to determine the extent to which the reduction in the number of graph representations and the reduced number of edges that the model has to predict improve the performance. The following experiments are performed:

\paragraph{GraphRNN-no-M} This experiment aims to assess the effect of the reduction in number of edges the model has to predict  on performance. Here, the nodes are arranged in BFS order as in the default GraphRNN model, but the input sequence is not truncated to be of size $M$. 

\paragraph{GraphRNN-no-BFS:} This experiment aims to assess the effect of the reduction in the number of graph representations on performance. Here, the adjacency matrix for each graph is permuted randomly, and the vector sequence $ S^\pi = (S^\pi_2,...,S^\pi_n) $ is fed into the model without further processing.\footnote{Note that since a BFS traversal is not used for $\pi$, an $M$ value cannot be calculated.} 

\smallskip

Both these experiments are run for the \textit{grid-small}, \textit{ego-small} and \textit{community-small} datasets.
Table \ref{tab:mvalues} contains the minimum and maximum number of nodes in each considered dataset, together with the $M$ values calculated by considering 9000 random BFS permutations of graphs from each. The grid-small dataset has a particularly low value of $M$ (which we attribute to the low density and node degrees of grid graphs), yielding a significant reduction in the number of edges the model needs to predict. In comparison, the ego-small and community-small datasets have $M$ values much closer to the maximum number of nodes.

We make the following hypotheses about the ablation study:

\paragraph{\textit{Hypothesis 1:}}In graphs where the $M$ value is close to the maximum number of nodes, not much performance is gained through the use of $M$-reduced data.
\paragraph{\textit{Hypothesis 2:}} BFS traversal-based ordering contributes significantly to performance in all cases.

\smallskip

\textit{Hypothesis 1} is suggested by the fact that when M is close to the number of nodes in a graph, it does not reduce the number of edge predictions required from the model by an appreciable amount. Thus, the model's performance should, at best, be only mildly improved.
\textit{Hypothesis 2} is motivated by the fact that there are typically far fewer than $n!$ BFS traversals of a graph (an argument also used by You et al. in favor of the BFS traversal). These hypotheses are investigated experimentally below.

\subsection{Ablation Results}
The results of our ablation experiments are shown in Table \ref{tab:results_ablation}.
For the community-small dataset, we observe that there is no decrease in performance on removing the $M$-reduction from the model for the degree and orbit metrics. There is a slight decrease in performance as measured along the clustering MMD metric. This is consistent with \textit{Hypothesis 1}, since the $M$ value for community-small is only 20\% less than the maximum number of nodes in a graph for this dataset, as shown in Table \ref{tab:mvalues}. On removing BFS from the model, a substantial decrease in performance is visible. The degree metric is slightly worse, while the clustering and orbit metrics are worse by an order of magnitude.

In the case of ego-small, on removing $M$, the model exhibits slightly better results for clustering and orbit, and slightly worse results for degree. Since the $M$ value reduction for this dataset is only 11\%, these results are also consistent with \textit{Hypothesis 1}. On removing BFS, we observe a consistent decrease in performance across all 3 metrics. Degree and orbit metrics worsen by an order of magnitude.

For the grid dataset, removing $M$ substantially worsens clustering and orbit MMD scores and mildly worsens the degree MMD score. Observe that for grid, the $M$ value reduction is significant, at 67\%. Thus, the observed sizeable regression in performance provides evidence to support \textit{Hypothesis 1}. Removing BFS from the model further worsens performance for degree and clustering metrics, while not impacting orbit.

\textit{Hypothesis 1} is thus experimentally supported, with the removal of $M$ significantly impacting only the datasets with high $M$-reduction percentages. Across datasets, we observe that removing BFS significantly hampers performance. This provides evidence in support of \textit{Hypothesis 2}.

\begin{table*}[t]
\centering
\caption{\label{tab:directed-results} Comparison of GraphRNN-DIR and GraphRNN-DAG models for the \textit{ego-directed} dataset, using MMD measures of degree and clustering}
\begin{tabular}{l|r|r|r}
     & \multicolumn{1}{l|}{\textbf{Degree}} & \multicolumn{1}{l|}{\textbf{Clustering}} & \multicolumn{1}{l}{\textbf{Components / graph}} \\ \hline
    \textbf{GraphRNN-DIR} & 0.746 & 1.343 & \textbf{1} \\
    \textbf{GraphRNN-DAG} & \textbf{0.054} & \textbf{0.125} & 1.3 \\ \hline
    \end{tabular}
\end{table*}

\section{Directed Graphs}
\label{sec:directed_graphs}
The original GraphRNN model considers only the case of undirected graphs. However, many graphs in practical applications are directed. In a directed graph, the edge relation $E$ need not be symmetric.
Under this relaxation of $E$, the adjacency matrix $A$ is no longer necessarily symmetric. Note that we continue to assume irreflexivity. We consider two avenues of extending GraphRNN to generate various classes of directed graphs.

\subsection{Generating Arbitrary Directed Graphs}

\renewcommand{\S}[2]{S^{\to #1}_{#2}}
\newcommand{\A}[2]{A^{\to #1}_{#2}}

So far, the graph sequences $S^\pi$ used to model graphs only contained zeros and ones, indicating whether nodes are connected or not.
In order to encode directed graphs, we introduce the notion of a directed graph sequence $\S{\pi}{}$ that also captures the direction of edges:
$$  \S{\pi}{} = (\S{\pi}{i}, ..., \S{\pi}{n}) $$
where
$$ \S{\pi}{i} = (\A{\pi}{1,i}, ..., \A{\pi}{i-1,i})^T \quad \text{for } 2 \leq i \leq n,$$
$$ \A{\pi}{i,j} = \begin{cases}
    (1,0,0,0) & \text{if } (\pi_i,\pi_j) \not\in E \land (\pi_j,\pi_i) \not\in E, \\
    (0,1,0,0) & \text{if } (\pi_i,\pi_j) \in E \land (\pi_j,\pi_i) \not\in E, \\
    (0,0,1,0) & \text{if } (\pi_i,\pi_j) \not\in E \land (\pi_j,\pi_i) \in E, \\
    (0,0,0,1) & \text{if } (\pi_i,\pi_j) \in E \land (\pi_j,\pi_i) \in E. \\
\end{cases} $$

You et al. mention directed graphs as a special instance of graphs with edge-features. Specifically, each edge belongs to one of two classes, indicating its direction.
We use a slight modification of this proposal. In our implementation, we encode the edge class as a one-hot vector of length 4. We use a softmax activation as the final layer of our edge-level model (both RNN and MLP) to generate a probability distribution over edge classes.
This extension of GraphRNN, which represents directed graphs through multiple edge classes, is called \textit{GraphRNN-DIR}. Nodes are ordered by a BFS traversal of the graph which respects edge directions.

\subsection{Generating Directed Acyclic Graphs}
While the above modification to GraphRNN is suitable for the general case of directed graphs, several interesting graphs exist with known structural constraints. For example, several graphs of interest are known to be directed acyclic graphs (DAGs).
We present a novel extension of GraphRNN to the special case of generating DAGs.

Recall that the input sequence to the GraphRNN model $S^{\pi}$ consists of the upper right triangular component of the graph's adjacency matrix.
We can formalise the notion of upper triangular matrices as follows:
\begin{definition}
    A matrix \,$A$ is upper right triangular if $A_{i, j} = 0$ for all $i,j$ with $i \geq j$.
\end{definition}
It turns out there is an interesting connection between DAGs and triangular matrices:
\begin{theorem} \label{thm:adj-topsort}
    An adjacency matrix $A^\pi$ is upper right triangular iff $\pi$ is a topological ordering of the associated graph.
\end{theorem}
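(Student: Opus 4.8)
The plan is to prove both implications by directly unfolding the definitions of \emph{upper right triangular} and \emph{topological ordering}, since once both notions are expressed in terms of the entries $A^\pi_{i,j}$ the equivalence is essentially a restatement. Throughout I would use that $\pi$ is a bijection on $V$, so that the node occupying position $i$ in the ordering induced by $\pi$ is precisely $\pi(v_i)$, and I would take the defining property of a topological ordering to be that every directed edge points from an earlier position to a later one.

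For the forward direction I would assume $A^\pi$ is upper right triangular and take an arbitrary edge $(u, w) \in E$. Since $\pi$ is a permutation there are unique indices $i, j$ with $u = \pi(v_i)$ and $w = \pi(v_j)$, so $A^\pi_{i,j} = 1$ by the definition of $A^\pi$. Triangularity forces $A^\pi_{i,j} = 0$ whenever $i \geq j$, hence $i < j$; that is, the source of the edge sits at an earlier position than its target, which is exactly the condition defining a topological ordering.

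For the converse I would assume $\pi$ is a topological ordering and show $A^\pi_{i,j} = 0$ for every pair with $i \geq j$. The off-diagonal case $i > j$ follows by contraposition: were $A^\pi_{i,j} = 1$, then $(\pi(v_i), \pi(v_j)) \in E$ would be an edge whose source position $i$ exceeds its target position $j$, contradicting the topological ordering property. The diagonal case $i = j$ is not covered by that property, since it only constrains edges between distinct nodes; I would discharge it using the standing irreflexivity assumption, as $A^\pi_{i,i} = 1$ would encode a forbidden self-loop $(\pi(v_i), \pi(v_i)) \in E$. Combining both cases yields triangularity.

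I do not expect a genuine obstacle beyond bookkeeping; the only point requiring care is the treatment of the diagonal, which must be handled by irreflexivity rather than by the topological-ordering condition itself. I would also remark that the theorem implicitly recovers the classical fact that a topological ordering exists iff the graph is acyclic: if the associated graph contains a directed cycle, no permutation can place every edge's source before its target, so no choice of $\pi$ can render $A^\pi$ upper right triangular.
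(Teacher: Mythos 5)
Your proof is correct and takes essentially the same approach as the paper's: both directions are established by directly unfolding the definitions of upper right triangularity and topological ordering. The only difference is that you explicitly separate out the diagonal case $i = j$ and discharge it via the standing irreflexivity assumption, whereas the paper treats all pairs $i \geq j$ uniformly --- a minor point of added care, not a different argument.
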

\begin{proof}
    We prove both directions separately:
    
    \smallskip
    $\Rightarrow$:
    Suppose $A^\pi$ is an upper triangular adjacency matrix of some graph $G$ under some ordering $\pi$.
    Let $j < i$.
    We have to show that there is no edge from $\pi(v_i)$ to $\pi(v_j)$.
    This trivially holds since $A^\pi_{i, j} = 0$ as $A^\pi$ is upper triangular.
    Thus, $\pi$ is a topological sort.
        
    \smallskip
    $\Leftarrow$:
    Suppose the ordering $\pi$ represents a topological sort.
    We claim that $A^\pi$ is an upper triangular matrix.
    Let $i \geq j$.
    Then, $\pi(v_i)$ is the $i$-th node in the topological sort, and $\pi(v_j)$ is the $j$-th node in the topological sort.
    Since $i \geq j$, there cannot be an edge from the $i$-th node to the $j$-th node (otherwise the $i$-th node would need to appear before the $j$-th node in $\pi$, giving rise to a contradiction).
    Therefore, we must have $A^\pi_{i,j} = 0$ such that $A^\pi$ is upper triangular.
    
\end{proof}
Thus, we can describe DAGs using the original sequence representation $S^\pi$ if $\pi$ is a topological sort.
We propose to replace the previously used BFS ordering with a topological ordering of the nodes and then apply the original GraphRNN model to learn the distribution of those sequences.
Interpreting the generated sequences as an upper triangular matrix ensures that all graphs generated by the model are DAGs. We call this architecture GraphRNN-DAG.

A downside of this method is that we are no longer able to prune the dimension of the adjacency vectors $S^\pi_i$ to a smaller size $M$, since the topological sort does not reduce the number of necessary edge-predictions in the manner BFS does.

\subsection{Evaluation}
To evaluate our topological sort approach's performance, we consider a modification of the ego-small dataset. This is a natural choice since citation graphs are largely acyclic and have a direction associated with each edge (indicating either a \enquote{cites} or a \enquote{cited-by} relation). We take the largest weakly-connected component of the citeseer dataset \cite{giles1998citeseer}, and generate 200 directed 3-hop\footnote{The hops ignore edge direction.} ego networks centering on random nodes in this graph. This dataset has $7 \leq |V| \leq 30$, and an $M$ value of 25. This data is then passed to 2 models: \textit{GraphRNN-DIR} and \textit{GraphRNN-DAG}, described in the preceding sections. The graphs generated from these trained models were evaluated using MMD comparison of degree and clustering metrics with the test set. We omit the comparison of orbit MMD since the implementation is limited to undirected graphs.

As shown in Table \ref{tab:directed-results}, the GraphRNN-DAG model outperforms the multiclass approach (GraphRNN-DIR) along each MMD metric by a significant margin. One weakness of the topological sort model is that it occasionally generates graphs with multiple weakly connected components. This does not mirror the training dataset, where each graph is a single weakly connected component.

\section{Discussion}\label{sec:discussion}
In this work, we implemented the GraphRNN model proposed by You et al. \cite{you2018graphrnn} and reproduced their numerical results for small datasets. We evaluated this implementation against BA and ER baselines using new metrics, finding improvements across the board.
The results of our ablation study in Section \ref{sec:bfs-ablation} show that the BFS traversal contributes significantly to model performance across datasets.
We extended GraphRNN to directed graphs by introducing multiclass predictions. We further presented a novel extension of GraphRNN to directed acyclic graphs, and evaluated this against the GraphRNN-DIR implementation. A key insight from our work is that using a topological sort in place of a BFS traversal yields improved performance for generating DAGs. An avenue for future research is to identify other domain-aware traversals which may yield improved isomorphism-reductions for specific datasets.

\bibliographystyle{unsrt}  
\bibliography{references}

\begin{thebibliography}{10}

\bibitem{you2018graphrnn}
Jiaxuan You, Rex Ying, Xiang Ren, William Hamilton, and Jure Leskovec.
\newblock Graph{RNN}: Generating realistic graphs with deep auto-regressive
  models.
\newblock In {\em International conference on machine learning}, pages
  5708--5717. PMLR, 2018.

\bibitem{Graphlet_counting}
Tomaž Hočevar and Janez Demšar.
\newblock A combinatorial approach to graphlet counting.
\newblock {\em Bioinformatics}, 30:559--565, 2014.

\bibitem{erdHos1960evolution}
Paul Erd{\H{o}}s, Alfr{\'e}d R{\'e}nyi, et~al.
\newblock On the evolution of random graphs.
\newblock {\em Publ. Math. Inst. Hung. Acad. Sci}, 5(1):17--60, 1960.

\bibitem{sen2008collective}
Prithviraj Sen, Galileo Namata, Mustafa Bilgic, Lise Getoor, Brian Galligher,
  and Tina Eliassi-Rad.
\newblock Collective classification in network data.
\newblock {\em AI magazine}, 29(3):93--93, 2008.

\bibitem{albert2002statistical}
R{\'e}ka Albert and Albert-L{\'a}szl{\'o} Barab{\'a}si.
\newblock Statistical mechanics of complex networks.
\newblock {\em Reviews of modern physics}, 74(1):47, 2002.

\bibitem{leskovec2010kronecker}
Jure Leskovec, Deepayan Chakrabarti, Jon Kleinberg, Christos Faloutsos, and
  Zoubin Ghahramani.
\newblock Kronecker graphs: an approach to modeling networks.
\newblock {\em Journal of Machine Learning Research}, 11(2), 2010.

\bibitem{airoldi2008mixed}
Edo~M Airoldi, David Blei, Stephen Fienberg, and Eric Xing.
\newblock Mixed membership stochastic blockmodels.
\newblock {\em Advances in neural information processing systems}, 21, 2008.

\bibitem{simonovsky2018graphvae}
Martin Simonovsky and Nikos Komodakis.
\newblock Graph{VAE}: Towards generation of small graphs using variational
  autoencoders.
\newblock In {\em International conference on artificial neural networks},
  pages 412--422. Springer, 2018.

\bibitem{li2018learning}
Yujia Li, Oriol Vinyals, Chris Dyer, Razvan Pascanu, and Peter Battaglia.
\newblock Learning deep generative models of graphs.
\newblock {\em arXiv preprint arXiv:1803.03324}, 2018.

\bibitem{giles1998citeseer}
C~Lee Giles, Kurt~D Bollacker, and Steve Lawrence.
\newblock Citeseer: An automatic citation indexing system.
\newblock In {\em Proceedings of the third ACM conference on Digital
  libraries}, pages 89--98, 1998.

\end{thebibliography}

\end{document}